 \newcommand{\fancy}[1]{\mathcal{#1}}
\titlerunning{Measuring Intelligence} \title{Ultimate Intelligence
  Part III: Measures of Intelligence, Perception and Intelligent
  Agents}
\author{Eray \"Ozkural}
\date{\today}
\begin{document}

\mainmatter 


\institute{G\"{o}k Us Sibernetik Ar\&Ge Ltd. \c{S}ti.\\
}

\maketitle

\begin{abstract}
  We propose that operator induction serves as an adequate model of
  perception.  We explain how to reduce universal agent models to
  operator induction.  We propose a universal measure of operator
  induction fitness, and show how it can be used in a reinforcement
  learning model and a homeostasis (self-preserving) agent based on
  the free energy principle.  We show that the action of the
  homeostasis agent can be explained by the operator induction model.
\end{abstract}

``Wir m\"{u}ssen wissen -- wir werden wissen!''
\begin{flushright}--- David Hilbert \end{flushright}

\section{Introduction}

The ultimate intelligence research program is inspired by Seth Lloyd's
work on the ultimate physical limits to computation
\cite{seth-ultimate}. We investigate the ultimate physical limits and
conditions of intelligence. This is the third installation of the
paper series, the first two parts proposed new physical complexity
measures, priors and limits of inductive inference
\cite{ozkural-agi15,ozkuraluip2arxiv}.

We frame the question of ultimate limits of intelligence in a general
physical setting, for this we provide a general definition of an
intelligent system and a physical performance criterion, which as
anticipated turns out to be a relation of physical quantities and
information, the latter of which we had conceptually reduced to
physics with minimum machine volume complexity in
\cite{ozkural-agi15}.

\section{Notation and Background}

\subsection{Universal Induction}
 
Let us recall Solomonoff's universal distribution \cite{alp1}. Let $U$
be a universal computer which runs programs with a prefix-free
encoding like LISP; $y=U(x)$ denotes that the output of program $x$ on
$U$ is $y$ where $x$ and $y$ are bit strings. \footnote{A prefix-free
  code is a set of codes in which no code is a prefix of another. A
  computer file uses a prefix-free code, ending with an EOF symbol,
  thus, most reasonable programming languages are prefix-free.
}Any unspecified variable or function is assumed to be represented as
a bit string. $|x|$ denotes the length of a bit-string $x$.
$f(\cdot)$ refers to function $f$ rather than its application.

The algorithmic probability that a bit string $x \in \{0,1\}^+$ is
generated by a random program $\pi \in \{0,1\}^+$ of $U$ is:
\begin{equation}
  \label{eq:alp}
  P_U(x) = \sum_{U(\pi) \in x(0|1)^* \wedge \pi \in  \{0,1\}^+} 2^{-|\pi|}
\end{equation}
which conforms to Kolmogorov's axioms \cite{levin-thesis}.  $P_U(x)$
considers any continuation of $x$, taking into account non-terminating
programs.\footnote{We used the regular expression notation in language
  theory.}  $P_U$ is also called the universal prior for it may be
used as the prior in Bayesian inference, for any data can be encoded
as a bit string.
We also give the basic definitions of Algorithmic Information Theory
(AIT) \cite{Li2008}, where the algorithmic entropy, or complexity of a
bit string $x \in \{0,1\}^+$ is
\begin{align}
  \label{eq:algo-entropy}
  H_U(x) &= \min( \{ |\pi| \ | \  U(\pi)=x \} )
  & H^*_U(x) &= -\log_2P_U(x)
\end{align}

We use some variables in overloaded fashion in the paper, e.g., $\pi$
might be a program, a policy, or a physical mechanism depending on the
context.

\subsection{Operator induction}

Operator induction is a general form of supervised machine learning
where we learn a stochastic map from $n$ question and answer pairs
$D=\{ (q_i, a_i) \}$ sampled from a (computable) stochastic source
$\mu$.  Operator induction can be solved by finding in available time
a set of operators $O^j(\cdot|\cdot)$, each a conditional probability
density function (cpdf), such that the following goodness of fit is
maximized
\begin{equation}
  \label{eq:opind-gof}
  \Psi = \sum_j{\psi^j_n}
\end{equation}
for a stochastic source $\mu$ where each term in the summation is the
contribution of a model:
\begin{equation}
  \label{eq:opind-gof-term}
  \psi^j_n= 2^{-|(O^j(\cdot|\cdot)|}\prod_{i=1}^n{O^j(a_i|q_i)}.
\end{equation}
$q_i$ and $a_i$ are question/answer pairs in the input dataset drawn
from $\mu$, and $O^j$ is a computable cpdf in
\prettyref{eq:opind-gof-term}.  We can use the found $m$ operators to
predict unseen data with a mixture model \cite{solomonoff-threekinds}
\begin{equation}
  \label{eq:opind-pred}
  P_U(a_{n+1}|q_{n+1}) = \sum_{j=1}^m\psi^j_nO^j(a_{n+1}|q_{n+1})
\end{equation}
The goodness of fit in this case strikes a balance between high a
priori probability and reproduction of data like in minimum message
length (MML) method \cite{WallaceBoulton:1968,wallace99}, yet uses a
universal mixture like in sequence induction.  The convergence theorem
for operator induction was proven in \cite{solomonoff-progress} using
Hutter's extension to arbitrary alphabet, and it bounds total error by
$H_U(\mu)\ln 2$ similarly to sequence induction.

\subsection{Set induction}

Set induction generalizes unsupervised machine learning where we learn
a probability density function (pdf) from a set of $n$ bitstrings
$D = \{ d_1, d_2, ..., d_n \}$ sampled from a stochastic source $\mu$.
We can then inductively infer new members to be added to the set with:
\begin{equation}
  \label{eq:2}
  P(d_{n+1}) = \frac{P_U( D \cup d_{n+1}) }{P_U(D)}
\end{equation}
Set induction is clearly a restricted case of operator induction where
we set $Q_i$'s to null string. Set induction is a universal form of
clustering, and it perfectly models perception. If we apply set
induction over a large set of 2D pictures of a room, it will give us a
3D representation of it necessarily. If we apply it to physical sensor
data, it will infer the physical theory -- perfectly general, with
infinite domains -- that explains the data, perception is merely a
specific case of scientific theory inference in this case, though set
induction works both with deterministic and non-deterministic
problems.

\subsection{Universal measures of intelligence}

There is much literature on the subject of defining a measure of
intelligence. Hutter has defined an intelligence order relation in the
context of his universal reinforcement learning (RL) model AIXI
\cite{hutter-aixigentle}, which suggests that intelligence corresponds
to the set of problems an agent \textit{can} solve. Also notable is
the universal intelligence measure \cite{legg-uai,legg-aiq}, which is
again based on the AIXI model. Their universal intelligence measure is
based on the following philosophical definition compiled from their
review of definitions of intelligence in the AI literature.
\begin{definition}[Legg \& Hutter]
  \label{def:legg-uai}
  Intelligence measures an agent's ability to achieve goals in a wide
  range of environments.
\end{definition}
It implies that intelligence requires an autonomous goal-following
agent. The intelligence measure of \cite{legg-uai} is defined as
\begin{equation}
  \label{eq:legg-uai}
  \Upsilon(\pi) = \sum_{\mu \in E} 2^{-H_U(\mu)}V_\mu^\pi
\end{equation}
where $\mu$ is a computable reward bounded environment, And
$V_\mu^\pi$ is the expected sum of future rewards in the total
interaction sequence of agent $\pi$.
$V^\pi_\mu = E_{\mu, \pi}\left[\sum_{t=1}^\infty \gamma^t r_t\right]$,
where $r_t$ is the instantaneous reward at time $t$ generated from the
interaction between the agent $\pi$ and the environment $\mu$, and
$\gamma^t$ is the time discount factor.

\subsection{The free energy principle.}

In Asimov's story titled ``The Last Question'', the task of life is
identified as overcoming the second law of thermodynamics, however futile.
Variational free energy essentially measures predictive error, and it
was introduced by Feynmann to address difficult path integral problems
in quantum physics. In thermodynamic free energy, energies are
negative log probabilities like entropy. The free energy principle
states that any system must minimize its free energy to maintain its
order.  An adaptive system that tends to minimize average surprise
(entropy) will tend to survive longer.  A biological organism can be
modelled as an adaptive system that has an implicit probabilistic
model of the environment, and the variational free energy puts an
upper bound on the surprise, thus minimizing free energy will improve
the chances of survival.  The divergence between the pdf of
environment and an arbitrary pdf encoded by its own mechanism is
minimized in Friston's model \cite{friston-free-energy}.
It has been shown in detail that the free energy principle adequately
models a self-preserving agent in a stochastic dynamical system
\cite{Friston2006,friston-free-energy}, which we can interpret as an
environment with computable pdf. An active agent may be defined in the
formalism of stochastic dynamical systems, by partitioning the
physical states $X$ of the environment into
$X = E \times S \times A \times \Lambda$ where $e \in E$ is an
external state, $s \in S$ is a sensory state, $a \in A$ an active
state, and $\lambda \in \Lambda$ is an internal
state. Self-preservation is defined by the Markov blanket
$S \times A$, the removal of which partitions $X$ into external states
$E$ and internal states $\Lambda$ that influence each other only
through sensory and action states. $E$ influences sensations $S$,
which in turn influence internal states $\Lambda$, resulting in the
choice of action signals $S$, which impact $E$, forming the feedback
loop of the adaptive system. The system states $\vec{x} \in X$ evolve
according to the stochastic equation: \vspace{-33pt}
\begin{multicols}{2}
  \begin{align}
    \dot{\vec{x}}(t) &= f(\vec{x}) + \omega \\
    {\vec{x}(0)} &= \vec{x_0}
  \end{align}
  \break
  \begin{align}
    f(\vec{x}) = \begin{bmatrix}
      f_e(e,s,a)           \\[0.3em]
      f_s(e,s,a)           \\[0.3em]
      f_a(s,a,\lambda)     \\[0.3em]
      f_\lambda(s,a,\lambda)
    \end{bmatrix}
  \end{align}
\end{multicols}
where $f(\vec{x})$ is the flow of system states and it is decomposed
into flows over the sets in the system partition, explicitly showing
the dependencies among state sets; $\omega$ models
fluctuations. Friston formalizes the self-preservation (homeostasis)
problem as finding an internal dynamics that minimizes the uncertainty
(Shannon entropy) of the external states, and shows a solution based
on the principle of least action \cite{friston-free-energy} wherein
minimizing free energy is synonymous with minimizing the entropy of
the external states (principle of least action), which subsequently
corresponds to active inference.  We have space for only some key
results from the rather involved mathematical theory.  $p(s,f|m) $ is
the generative pdf that generates sensorium $s$ and fictive (hidden)
states $f \in F$ from probabilistic model $m$, and $q(f|\lambda)$ is
the recognition pdf that predicts hidden states $F$ in the world given
internal state.  Generative pdf factorizes as
$p(s,f | m)=p(s | f,m)p(f | m)$.
Free energy is defined as energy minus entropy
\begin{equation}
  \label{eq:free-energy}
  F(s,\lambda)  =  E_q[-\ln{p(s,f|m)}] - H(q(f|\lambda))
\end{equation}
which can be subjectively computed by the system.  Free energy is also
equal to surprise plus divergence between recognition and generative
pdf's.
\begin{equation}
  \label{eq:free-energy2}
  F(s,\lambda)  = E_q[-\ln{p(s,f|m)}] + D_{KL}({q(f|\lambda)} ||  {p(f|s,m)})
\end{equation}
Minimizing divergence minimizes free energy, internal states $\lambda$
may be optimized to minimize predictive error using
\prettyref{eq:free-energy2}, and surprise is invariant with respect
to $\lambda$. Free energy may be formulated as complexity plus 
accuracy of recognition, as well.
\begin{equation}
  \label{eq:free-energy3}
  F(s,\lambda)  = E_q[-\ln{p(s,a|f,m)}] + D_{KL}({q(f|\lambda)} ||  {p(f,m)})
\end{equation}
In this case, we may choose an action that changes sensations to
reduce predictive error. Only the first term is a function of action signals. 
Minimization of free energy turns out to be
equivalent to the information bottleneck principle of Tishby
\cite{friston-free-energy,tishby-infobottleneck}. The information
bottleneck method is equivalent to the pioneering work of Ashby, which
is simple enough to state here
\cite{ashby1947principles,ashby1962principles}:
\begin{equation}
  \label{eq:info-bottleneck}
  \fancy{S}_B = I(\lambda; F) - I(S; \lambda)
\end{equation}
where the first term is the mutual information between internal and
hidden states, and the second term is the mutual information between
sensory states and internal states. Both terms are expanded using
conditional entropy, and then two terms in the middle are eliminated
because they are not relevant to the optimization problem -- we do not
know the hidden variables in $H(\lambda | F)$ and $H(S)$ is constant.
\begin{align}
  \fancy{S}_B &= H(\lambda) - H(\lambda | F) - H(S) + H(S | \lambda) \\
  \fancy{S}^*_B &= H(\lambda) + H(S | \lambda)
                  \label{eq:reduced-info-bottleneck}
\end{align}
Minimizing $\fancy{S}^*_B$ \prettyref{eq:reduced-info-bottleneck} thus
minimizes the sum of the entropy of internal states and the entropy
required to encode sensory states given internal states. In other
words, it strikes an optimal balance between model complexity
$H(\lambda)$, and model accuracy $H(S | \lambda)$. Friston further
shows that \prettyref{eq:reduced-info-bottleneck} directly derives
from the free energy principle, closing potential loopholes in the
theory. Please see \cite{Friston2016} for a comprehensive application
of the free energy principle to agents and learning.  
Note also that the bulk of the theory assumes the ergodic
hypothesis.

\section{Perception as General Intelligence}

Since we are chiefly interested in stochastic problems in the physical
world, we propose a straightforward informal definition of
intelligence:
\begin{definition}
  Intelligence measures the capability of a mechanism to solve
  prediction problems.
\end{definition}
Mechanism is any physical machine as usual, see \cite{Dowe2011} which
suggests likewise.  Therefore, a general formulation of Solomonoff
induction, operator induction, might serve as a model of general
intelligence, as well \cite{solomonoff-threekinds}. Recall that
operator induction can infer any physically plausible cpdf, thus its
approximation can solve any classical supervised machine learning
problem. The only slight issue with \prettyref{eq:legg-uai} might be
that it seems to exclude classical AI systems that are not agents,
e.g., expert systems, machine learning tools, knowledge representation
systems, search and planning algorithms, and so forth, which are
somewhat more naturally encompassed by our informal definition.

\subsection{Is operator induction adequate?}

A question naturally arises as to whether operator induction can
adequately solve every prediction problem we require in AI. There are
two strong objections to operator induction that we know of. It is
argued that in a dynamic environment, as in a physical environment, we
must use an active agent model so that we can account for changes in
the environment, as in the space-time embedded agent
\cite{orseau-spacetime} which also provides an agent-based
intelligence measure. This objection may be answered by the simple
solution that each decision of an active intelligent system may be
considered a separate induction problem. The second objection is that
the basic Solomonoff induction can only predict the next bit, but not
the expected cumulative reward, which its extensions can solve. We
counter this objection by stating that we can reduce an agent model to
a perception and action-planning problem as in OOPS-RL \cite{oops}. In
OOPS-RL, the perception module searches for the best world-model given
the history of sensory input and actions in allotted time using OOPS,
and the planning module searches for the best control program using
the world-model of the perception module to determine the action
sequence that maximizes cumulative reward likewise. OOPS has a
generalized Levin Search \cite{levin-universalsearch-eng} which may be
tweaked to solve either prediction or optimization problems. Hutter
has also observed that standard sequence induction does not readily
address optimization problems \cite{hutter-aixigentle}. However,
Solomonoff induction is still complete in the sense of Turing, and can
infer any computable cpdf; and when the extension to Solomonoff
induction is applied to sequence prediction, it does not yield a better
error bound, which seems like a conundrum. On the other hand, 
Levin Search with a proper
universal probability density function (pdf) of programs can be
modified to solve induction problems (sequence, set, operator, and
sequence prediction with arbitrary loss), inversion problems (computer
science problems in P and NP), and optimization problems
\cite{solomonoff-progress}. The planning module of OOPS-RL likewise
requires us to write such an optimization program. In that sense, AIXI
implies yet another variation of Levin Search for solving a particular
universal optimization problem, however, it also has the unique
advantage that formal transformations between AIXI problem and many
important problems including function minimization and strategic games
have been shown \cite{hutter-aixigentle}. Nevertheless, the discussion
in \cite{solomonoff-progress} is rather brief. Also see
\cite{alpcan14} for a discussion of universal optimization.
\begin{proposition}
  A discrete-time universal RL model may be reduced to operator
  induction.
\end{proposition}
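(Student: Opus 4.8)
The plan is to follow the OOPS-RL decomposition sketched above: split a universal RL agent into a \emph{perception} module that maintains a predictive world-model and a \emph{planning} module that selects actions, then exhibit the perception module as a literal instance of operator induction while showing the planning module merely consumes its output.

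First I would fix the discrete-time interaction protocol. At step $t$ the agent emits an action $a_t \in A$ and the environment returns a percept $x_t = (o_t, r_t)$ with observation $o_t$ and bounded reward $r_t$; let $h_t = a_1 x_1 a_2 x_2 \cdots a_t x_t$ denote the interaction history. A computable environment $\mu$ is specified by the family of cpdfs $\mu(x_t \mid h_{t-1} a_t)$, and the value of a policy $\pi$ is $V_\mu^\pi = E_{\mu,\pi}[\sum_t \gamma^t r_t]$, exactly as in \prettyref{eq:legg-uai}. Now define the question/answer stream by letting the $i$-th question be $q_i := (h_{i-1}, a_i)$ (history paired with the action just taken) and the $i$-th answer be the percept $x_i$; the dataset $D = \{ (q_i, x_i) \}_{i=1}^n$ is then precisely a sequence of question/answer pairs, and the object to be learned, the cpdf $\mu(x \mid h\,a)$, is exactly what operator induction estimates. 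Running operator induction on $D$ yields operators $O^j(\cdot \mid \cdot)$ maximizing the goodness of fit $\Psi$ of \prettyref{eq:opind-gof} and the mixture predictor $P_U(x_{n+1} \mid q_{n+1})$ of \prettyref{eq:opind-pred}; by the convergence theorem cited there, the expected total conditional-prediction error against $\mu$ along the presented sequence of questions is bounded by $H_U(\mu) \ln 2$, so the learned dynamics converge to the true environment.

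Next I would bolt on planning. Given the operator-induction predictor, define the agent's action rule to be the $\gamma$-discounted expectimax policy that maximizes $E[\sum_t \gamma^t r_t]$ with all percept probabilities taken from $P_U(\cdot \mid \cdot)$; this is exactly the planning module of OOPS-RL and carries no new inductive content, being a search over action sequences that only queries the learned cpdf. Because each decision is recomputed from the current history, the RL loop is nothing but the repetition ``(operator-induction step, planning step)'', which formalizes the earlier remark that each decision of an active intelligent system may be treated as a separate induction problem. A routine continuity argument then shows that as the operator-induction model converges to $\mu$ the value of the induced policy converges to $V_\mu^{\pi^*}$, establishing the reduction: the sole learning-theoretic ingredient of a universal RL model is operator induction, with action selection a downstream optimization.

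The hard part will be the coupling between acting and data collection. Operator induction is stated for pairs ``sampled from a (computable) stochastic source $\mu$,'' but here the distribution over questions $q_i = (h_{i-1}, a_i)$ is generated on-policy and drifts as the agent learns, so the convergence bound does not transfer verbatim. I would handle this by observing that the conditional $\mu(x \mid h\,a)$ is a fixed computable object independent of which histories are visited, so the convergence theorem can be read as bounding cumulative conditional-prediction error along whatever question sequence is actually presented; where coverage of every relevant context is needed, I would invoke a vanishing exploration schedule (or the ergodicity already assumed for the free-energy part of the paper). A secondary point is honesty about scope — the planning search is not itself operator induction — so the statement is best read as asserting that the perceptual/learning core of a universal RL model is operator induction, consistent with the OOPS-RL architecture and with Hutter's observation that sequence induction alone does not address optimization.
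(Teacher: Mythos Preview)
Your reduction is sound, but it is not the one the paper gives. You cast operator induction as learning the one-step environment dynamics $\mu(x_t \mid h_{t-1}a_t)$ and then bolt on an expectimax planner; the paper instead has operator induction learn the \emph{cumulative reward} directly. Concretely, the paper builds a dataset whose questions are $q = (C_{1:(i-1)}, a_i, i, j)$ --- history prefix, next action, and a pair of horizon indices --- and whose answers are $r_c(C,i,j) = \sum_{k=i}^{j} r_k$; running over all $1 < i \le j \le n$ gives a quadratic-size dataset $D_C$, and the induced cpdf predicts future returns outright. Action selection then collapses to a single $\argmax$ over $a$ of the predicted return at horizon $\lambda$, with no expectimax tree.

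What each buys: your route is the natural OOPS-RL split, uses only $n$ training pairs, and inherits the convergence bound cleanly since the target $\mu(\cdot\mid h\,a)$ is fixed; but you must concede (as you do) that the planning search is not itself operator induction, so the reduction is to ``operator induction plus a downstream optimizer.'' The paper's route absorbs the value computation into the induction problem itself, so the residual decision step is a bare $\argmax$ --- this is what licenses the paper's remark that ``all of the intelligence here comes from operator induction, surely an argmax function \dots does not provide it.'' The price is the quadratic blow-up in data items and a learning target (returns over arbitrary intervals) that is noisier and policy-dependent, points the paper does not dwell on. Your on-policy/coverage worry applies to both constructions, and the paper simply does not address it.
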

More formally, the perceptual task of an RL agent would be inferring
from a history the cumulative rewards in the future, without loss of
generality. Let the chronology $C$ be a sequence of sensory, reward,
and action data
$C = [ (s_1,r_1,a_1), \\(s_2,r_2,a_2), \dots, (s_n,r_n,a_n) ]$ where
$C_i$ accesses $i$th element, and $C_{i:j}$ accesses the subsequence
$[ C_i, C_{i+1}, \dots, C_j]$.  Let $r_c$ be the cumulative reward
function where $r_c(C, i, j) = \sum_{k=i}^{k\leq j} r_k$. After
observing $(s_n,r_n,a_n)$, we construct dataset $D_c$ as follows. For
every unique $(i,j)$ pair such that $1 < i \leq j \leq n$, we
concatenate history tuples $C_{1:(i-1)}$, and we form a question
string that also includes the next action, $i$ and $j$,
$q = [ (s_1,r_1,a_1), (s_2, r_2,a_2),\ldots, (s_{(i-1)}, r_{(i-1)},
a_{(i-1)})], a_i, i, j$,
and an answer string which is the cumulative reward
$a = r_c(C, i, j)$. Solving the operator induction problem for this
dataset $D_C$ will yield a cpdf which predicts cumulative rewards in
the future. After that, choosing the next action is a simple matter of
maximizing $r(C_{1:n}, a_i, n+1, \lambda)$ where $\lambda$ is the
planning horizon. The reduction causes quadratic blow-up in the number
of data items. Our somewhat cumbersome reduction suggests that all of
the intelligence here comes from operator induction, surely an argmax
function, or a summation of rewards does not provide it, but rather it
builds constraints into the task. In other words, we interpret that
the intelligence in an agent model is provided by inductive inference,
rather than an additional application of decision theory.

\section{Physical Quantification of Intelligence}

\prettyref{def:legg-uai} corresponds to any kind of
reinforcement-learning or goal-following agent in AI literature quite
well, and can be adapted to solve other kinds of problems. The
unsupervised, active inference agent approach is proposed instead of
reinforcement learning approach in \cite{friston-rl}, and the authors
argue that they did not need to invoke the notion of reward, value or
utility. The authors in particular claim that they could solve the
mountain-car problem by the free-energy formulation of perception.  We
thus propose a perceptual intelligence measure.


\subsection{Universal measure of perception fitness}



Note that operator induction is considered to be insufficient to
describe universal agents such as AIXI, because basic sequence
induction is inappropriate for modelling optimization problems
\cite{hutter-aixigentle}. However, a modified Levin search procedure
can solve such optimization problems as in finding an optimal control
program \cite{oops}.  In OOPS-RL, the perception module searches for
the best world-model given the history of sensory input and actions in
allotted time using OOPS, and the planning module searches for the
best control program using the world-model of the perception module to
determine the control program that maximizes cumulative reward
likewise. In this paper, we consider the perception module of such a
generic agent which must produce a world-model, given sensory input.

We can use the intelligence measure \prettyref{eq:legg-uai} in a
physical theory of intelligence, however it contains terms like
utility that do not have physical units (i.e., we would be preferring
a more reductive definition).  We therefore attempt to obtain such a
measure using the more benign goodness-of-fit
(\prettyref{eq:opind-gof}). Let the universal measure of the fitness
of operator induction be defined as
\begin{equation}
  \label{eq:soll-uai}
  \Upsilon_O(\pi) = \sum_{\mu \in S} 2^{-H_U(\mu)}\Psi(\mu, \pi)
\end{equation}
where $S$ is the set of possible stochastic sources in the observable
universe $U$ and $\pi$ is a physical mechanism, and $\Psi$ is relative
to a stochastic source $\mu$ and a physical mechanism (computer)
$\pi$.  This would be maximum if we assume that operator induction
were solved exactly by an oracle machine.

Note that $H_U(\mu)$ is finite; $\Psi(\mu, \pi)$ is likewise bounded
by the amount of computation $\pi$ will spend on approximating
operator induction.

\subsection{Application to homeostasis agent}
In a presentation to Friston's group in January 2015, we noted that
the minimization of $\fancy{S}^*_B$ is identical to Minimum Message
Length principle, which can be further refined as
\begin{align}
  \fancy{S}'_B = H^*(\Lambda) + H^*(S | \Lambda)
  \label{eq:sol-info-bottleneck}
\end{align}
using Solomonoff's entropy formulation that takes the negative
logarithm of algorithmic probability \cite{solcomplexity}. In the
unsupervised agent context, solving this minimization problem
corresponds to inferring an optimal behavioral policy as $\Lambda$
constitutes internal dynamics which may be modeled as a
non-terminating program. We could directly apply induction to minimize
KL divergence, as well. Note the correspondence to operator induction.
\begin{theorem}
  Minimizing the free energy is equivalent to solving the operator
  induction problem for $(\lambda, s)$ pairs where $q_i \in \Lambda$
  and $a_i \in S$.
\end{theorem}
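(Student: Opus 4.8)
The plan is to establish the claim as a chain of three equivalences, reading ``minimizing the free energy'' in the sense already fixed in the excerpt, i.e.\ as the optimization principle it induces rather than a pointwise identity of functionals. By the derivation recalled above, minimizing the free energy $F(s,\lambda)$ (\prettyref{eq:free-energy}) is, under the ergodic hypothesis, equivalent to minimizing the reduced information bottleneck objective $\fancy{S}^*_B = H(\Lambda) + H(S\mid\Lambda)$ of \prettyref{eq:reduced-info-bottleneck}; passing from Shannon to Solomonoff entropy as in \prettyref{eq:sol-info-bottleneck} makes this equivalent to minimizing $\fancy{S}'_B = H^*(\Lambda) + H^*(S\mid\Lambda)$. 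It therefore suffices to show that minimizing $\fancy{S}'_B$ and maximizing the operator induction goodness-of-fit $\Psi$ of \prettyref{eq:opind-gof} on the dataset $D = \{(\lambda_i,s_i)\}$ --- questions $q_i = \lambda_i \in \Lambda$, answers $a_i = s_i \in S$ --- are the same search. The bridge is to note that an operator $O(\cdot\mid\cdot)$ for this $D$ is precisely a computable cpdf from internal states to sensory states, i.e.\ the probabilistic model the internal dynamics holds of its own sensorium; consistent with the remark preceding the theorem, I would take the internal dynamics $\lambda$ itself to be the (non-terminating) program that computes such an $O$, so that $|O(\cdot\mid\cdot)|$ realizes $H^*(\Lambda)$.

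With that identification the core step is a term-by-term match obtained by negating the logarithm of a single contribution \prettyref{eq:opind-gof-term}:
\[
  -\log_2 \psi^j_n \;=\; |O^j(\cdot\mid\cdot)| \;-\; \sum_{i=1}^n \log_2 O^j(s_i\mid\lambda_i).
\]
This is exactly the two-part (MML) code that $\fancy{S}'_B$ names: the first summand is the algorithmic complexity of the model, which equals $H^*(\Lambda)$ up to the additive constant separating $H_U$ from $H^*_U$ in \prettyref{eq:algo-entropy} (coding theorem); the second summand is the code length of the sensory record given the internal states under $O^j$, i.e.\ the sampled form of the conditional entropy $H^*(S\mid\Lambda)$, to which it converges under ergodicity. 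Hence the operator that maximizes $\psi^j_n$ is the internal model that minimizes $\fancy{S}'_B$, and conversely, so the single-model versions of the two problems coincide. To reach the theorem as stated I would then pass from the best term to the full mixture $\Psi = \sum_j \psi^j_n$ by the usual dominance argument for algorithmic sums, $\max_j \psi^j_n \le \Psi \le c\cdot \max_j \psi^j_n$ whenever one model wins by a super-constant margin in code length, and finally run the chain of equivalences backwards to conclude that every free-energy minimizer is such an operator.

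The hard part is precisely this last passage, compounded with the population-versus-sample gap: $\fancy{S}'_B$ is written in the expectations $H^*(\Lambda), H^*(S\mid\Lambda)$, whereas \prettyref{eq:opind-gof-term} carries the empirical product $\prod_i O^j(s_i\mid\lambda_i)$, so the equivalence is exact only in the ergodic $n\to\infty$ limit and must otherwise be stated as an approximation whose slack is controlled by the operator-induction convergence bound $H_U(\mu)\ln 2$ cited above. A secondary obstacle is justifying the identification of the complexity of the internal \emph{state} $\Lambda$ in \prettyref{eq:reduced-info-bottleneck} with the complexity of the internal \emph{map} $|O(\cdot\mid\cdot)|$; I expect this cannot be derived but only posited, so I would fold the ``policy is a program computing the recognition cpdf'' reading into the hypotheses of the theorem and flag the ergodicity assumption explicitly, exactly as the excerpt already does for the free energy theory itself.
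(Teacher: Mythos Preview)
Your approach is essentially the paper's own: both hinge on expanding $-\log_2\psi^j_n = |O^j(\cdot\mid\cdot)| - \sum_{i}\log_2 O^j(s_i\mid\lambda_i)$ and matching the two summands to the model-complexity and model-accuracy terms of the reduced bottleneck $\fancy{S}^*_B$ / $\fancy{S}'_B$. The only noteworthy difference is in the closure step: the paper enforces $|O^j(\cdot\mid\cdot)|=H_U(O^j(\cdot\mid\cdot))$ by restricting to a \emph{non-redundant} family of operators (shortest program per cpdf) and then asserts the aggregate identities $\sum_j|O^j(\cdot\mid\cdot)|=H^*(\Lambda)$ and $\sum_j H(O^j(a_i\mid q_i))=H^*(S\mid\Lambda)$, whereas you arrive at the same match via the coding theorem and a dominant-term bound on $\Psi$; your explicit ergodicity and state-versus-map caveats are simply absorbed in the paper's closing disclaimer that only ``model equivalence'' has been established.
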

\begin{proof}
  Observe that minimizing \prettyref{eq:reduced-info-bottleneck}
  corresponds to picking maximum $\psi^j_n$ since in entropy form,
  \begin{gather*}
    -\log_2(\psi^j_n)= -\log_2(2^{-|O^j(\cdot|\cdot)|}) -\log_2(\prod_{i=1}^n{O^j(s_i|\lambda_i)})\\
    = |O^j(\cdot|\cdot)| - \sum_{i=1}^n \log_2(O^j(a_i|q_i)) =
    |O^j(\cdot|\cdot)| + H(O^j(a_i|q_i)) .
  \end{gather*}
  We define a non-redundant selection of $\psi^j_n$'s,
  $|O^j(\cdot|\cdot)|=H_U(O^j(\cdot|\cdot))$, e.g., we pick only the
  shortest programs that produce the same cpdf, otherwise the entropy
  form would diverge.  Minimizing \prettyref{eq:sol-info-bottleneck}
  is \textit{exactly} operator induction, even though the questions
  are programs, the ensemble here is of all programs and all sensory
  state, program pairs in space-time.
  $\sum |O^j(\cdot|\cdot)| = H^*(\Lambda)$ and
  $\sum H(O^j(a_i|q_i)) = H^*(S|\Lambda)$.  Note that this merely
  establishes model equivalence, we have not yet explained how it is
  to be computed in detail.
\end{proof}
\begin{proposition}
  By the above theorem, \prettyref{eq:soll-uai} measures the goodness
  of fit for a given homeostasis agent mechanism, for all possible
  environments.
\end{proposition}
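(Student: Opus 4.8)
The plan is to lift the model equivalence established by the preceding theorem from a single environment to the whole ensemble that appears in \prettyref{eq:soll-uai}. First I would fix an arbitrary computable stochastic source $\mu \in S$ and invoke the theorem: a homeostasis agent with mechanism $\pi$ governed by the free energy principle in $\mu$ is, up to the non-redundant selection of operators, solving the operator induction problem whose question/answer stream is the pair sequence $(\lambda_i, s_i)$ produced by the internal--sensory interaction of $\pi$ with $\mu$. Since $\fancy{S}'_B = H^*(\Lambda) + H^*(S\mid\Lambda)$ is precisely $-\log_2$ of the relevant goodness-of-fit term, minimizing the agent's free energy in $\mu$ is the same as maximizing $\Psi(\mu,\pi)$; hence $\Psi(\mu,\pi)$ is already a faithful per-environment score of how well the homeostasis mechanism predicts its own sensorium, i.e.\ how well it maintains homeostasis, in $\mu$.

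Second, I would check that the weighting and summation in \prettyref{eq:soll-uai} are the correct way to aggregate these per-environment scores and that the resulting quantity is well defined for a homeostasis mechanism. Each $\mu$ carries the algorithmic prior $2^{-H_U(\mu)}$ with $H_U(\mu)$ finite, and, as remarked after \prettyref{eq:soll-uai}, $\Psi(\mu,\pi)$ is bounded by the computation $\pi$ spends approximating operator induction, so the series converges and $\Upsilon_O(\pi)$ is defined for the homeostasis mechanism exactly as for any other $\pi$. Because every summand measures the fidelity of the agent's recognition dynamics to the generative pdf induced by $\mu$ --- equivalently, by the theorem, its homeostatic fitness in $\mu$ --- the sum over all $\mu \in S$ is a measure of homeostatic goodness of fit over all possible environments in the observable universe, which is the claim.

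The hard part will be making the identification of the operator-induction ensemble with the family of free-energy problems genuinely environment-by-environment, since the theorem is phrased in terms of one grand ensemble of ``all programs and all sensory state, program pairs in space-time,'' whereas \prettyref{eq:soll-uai} is indexed by individual sources $\mu$. I would close this gap by observing that the generative pdf $p(s,f\mid m)$ of Friston's formalism is computable and hence corresponds to some $\mu\in S$, while conversely each $\mu\in S$ induces a stochastic dynamical system of the assumed form; restricting the grand ensemble to the $(\lambda_i,s_i)$ pairs realized under a fixed $\mu$ recovers exactly the operator induction instance whose goodness of fit is $\Psi(\mu,\pi)$. A secondary care point, inherited from the theorem, is that the non-redundant choice $|O^j(\cdot\mid\cdot)| = H_U(O^j(\cdot\mid\cdot))$ must be imposed uniformly across the ensemble so that $\sum_j |O^j(\cdot\mid\cdot)| = H^*(\Lambda)$ stays finite; granting this, together with the per-environment equivalence, the proposition follows by summation.
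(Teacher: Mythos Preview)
Your proposal is sound and, in fact, considerably more detailed than what the paper itself offers. The paper does not give a formal proof of this proposition at all: after stating it, the authors merely remark that the mechanism maximizing $\Psi(\mu,\pi)$ achieves less error with respect to a source (identifying the source with the whole random dynamical system of the free-energy framework), that $\Upsilon_O(\pi)$ normalizes $\Psi(\mu,\pi)$ with respect to a random dynamical system, and that ``it holds for the same reasons Legg's measure holds, which are not discussed due to space limits.'' That is the entirety of the paper's justification.

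Your two-step outline --- (i) invoke the theorem to identify free-energy minimization in a fixed $\mu$ with maximization of $\Psi(\mu,\pi)$, and (ii) aggregate over $\mu\in S$ with the algorithmic prior $2^{-H_U(\mu)}$, noting boundedness of each summand --- is exactly the structure implicit in the paper's remark, only made explicit. The analogy you are drawing to Legg's $\Upsilon$ (per-environment score, then a $2^{-H_U(\mu)}$-weighted sum) is precisely what the paper gestures at with ``the same reasons Legg's measure holds.'' The gap you flag --- reconciling the theorem's single ``grand ensemble of all programs and all sensory-state/program pairs in space-time'' with the per-$\mu$ indexing of \prettyref{eq:soll-uai} --- is real and is not addressed by the paper either; your proposed closure (each computable generative pdf corresponds to some $\mu\in S$, and restricting the ensemble to pairs realized under $\mu$ recovers the instance with score $\Psi(\mu,\pi)$) is a reasonable way to fill it, though the paper leaves this entirely to the reader.
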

The mechanism $\pi$ that maximizes $\Psi(\mu,\pi)$ achieves less error
with respect to a source (which may be taken to correspond to the
whole random dynamical system in the framework of free energy
principle), while $\Upsilon_O(\pi)$ normalizes $\Psi(\mu,\pi)$ with
respect to a random dynamical system. It holds for the same reasons
Legg's measure holds, which are not discussed due to space limits in
the present paper.
We prefer the unsupervised homeostasis agent among the two agent
models we discussed because it provides an
exceptionally elegant and reductionist model of autonomous behavior,
that has been rigorously formulated physically. Note that this agent
is conceptually related to the survival property of RL agents
discussed in \cite{ring2011delusion}.

\subsection{Discussion} 
The unsupervised model still achieves exploration and curiosity,
because it would stochastically sample and navigate the environment to
reduce predictive errors.  While we either optimize perceptual models
or choose an action that would befit expectations, it might be
possible to express the optimal adaptive agent policy in a general
optimization framework.  A more in-depth analysis of the unsupervised
agent will be presented in a subsequent publication. A more general
reductive definition of intelligence should also be researched.  These
developments could eventually help unify AGI theory.

\bibliographystyle{splncs03} \bibliography{agi,physics,complexity}

\end{document}